\newcommand{\method}{LEGO}
\DeclareMathOperator*{\argmin}{arg\,min}
\DeclareMathOperator*{\argmax}{arg\,max}
\newtheorem{theorem}{theorem}
\newtheorem{definition}{definition}
\begin{document}

\title{LEGO: A Lightweight and Efficient Multiple-Attribute Unlearning Framework for Recommender Systems}


\author{Fengyuan Yu}
\orcid{0009-0005-0491-1869}
\affiliation{%
  \institution{Zhejiang University}
  \city{Hangzhou}
  \country{China}}
\email{fengyuanyu@zju.edu.cn}

\author{Yuyuan Li}
\orcid{0000-0003-4896-2885}
\affiliation{%
  \institution{Hangzhou Dianzi University}
  \city{Hangzhou}
  \country{China}}
\email{y2li@hdu.edu.cn}

\author{Xiaohua Feng}
\orcid{0009-0001-6829-7088}
\affiliation{%
  \institution{Zhejiang University}
  \city{Hangzhou}
  \country{China}}
\email{fengxiaohua@zju.edu.cn}

\author{Junjie Fang}
\orcid{0009-0002-7981-1609}
\affiliation{%
  \institution{Hangzhou Dianzi University}
  \city{Hangzhou}
  \country{China}}
\email{junjiefang@hdu.edu.cn}

\author{Tao Wang}
\orcid{0009-0008-8649-1301}
\affiliation{%
  \institution{Midea Group}
  \city{Foshan}
  \country{China}}
\email{tao.wang.seu@gmail.com}

\author{Chaochao Chen}
\orcid{0000-0003-1419-964X}
\authornote{Corresponding author}
\affiliation{%
  \institution{Zhejiang University}
  \city{Hangzhou}
  \country{China}}
\email{zjuccc@zju.edu.cn}


\begin{abstract}
With the growing demand for safeguarding sensitive user information in recommender systems, recommendation attribute unlearning is receiving increasing attention.
Existing studies predominantly focus on single-attribute unlearning.
However, privacy protection requirements in the real world often involve multiple sensitive attributes and are dynamic.
Existing single-attribute unlearning methods cannot meet these real-world requirements due to i) \textbf{CH1}: the inability to handle multiple unlearning requests simultaneously, and ii) \textbf{CH2}: the lack of efficient adaptability to dynamic unlearning needs.
To address these challenges, we propose \method, a lightweight and efficient multiple-attribute unlearning framework.
Specifically, we divide the multiple-attribute unlearning process into two steps:
i) \textit{Embedding Calibration} removes information related to a specific attribute from user embedding,
and ii) \textit{Flexible Combination} combines these embeddings into a single embedding, protecting all sensitive attributes.
We frame the unlearning process as a mutual information minimization problem, providing \method{} a theoretical guarantee of simultaneous unlearning, thereby addressing \textbf{CH1}.
With the two-step framework, where \textit{Embedding Calibration} can be performed in parallel and \textit{Flexible Combination} is flexible and efficient, we address \textbf{CH2}.
Extensive experiments on three real-world datasets across three representative recommendation models demonstrate the effectiveness and efficiency of our proposed framework.
Our code and appendix are available at \href{https://github.com/anonymifish/lego-rec-multiple-attribute-unlearning}{https://github.com/anonymifish/lego-rec-multiple-attribute-unlearning}.
\end{abstract}

\begin{CCSXML}
<ccs2012>
<concept>
<concept_id>10002978.10003029</concept_id>
<concept_desc>Security and privacy~Human and societal aspects of security and privacy</concept_desc>
<concept_significance>500</concept_significance>
</concept>
<concept>
<concept_id>10002951.10003227.10003351.10003269</concept_id>
<concept_desc>Information systems~Collaborative filtering</concept_desc>
<concept_significance>500</concept_significance>
</concept>
</ccs2012>
\end{CCSXML}

\ccsdesc[500]{Information systems~Collaborative filtering}
\ccsdesc[500]{Security and privacy~Human and societal aspects of security and privacy}

\keywords{Recommender System, Collaborative Filtering, Attribute Unlearning}


\maketitle

\section{Introduction}\label{sec:intro}
Modern recommender systems commonly use Collaborative Filtering (CF) algorithms to provide personalized recommendations~\cite{schafer2007collaborative, koren2021advances, liu2025gated, wang2024feddse, wang2023dafkd, li2025personalized, zhou2022advances, wu2025unlearning}.
However, privacy concerns regarding personalized recommendations have increased, with increasing demand for protection against the misuse of sensitive user information.
As a protective measure, the \textit{Right to be Forgotten} requires recommendation platforms to allow users to withdraw individual data~\cite{voigt2017eu, scassa2020data, bonta2022california, chen2022recommendation}.
\textit{Recommendation unlearning} is an emerging approach for addressing these privacy concerns.
One line of research, i.e., \textit{input unlearning}, focuses on enabling the model to forget specific training data~\cite{li2024survey}.
Another line of research, i.e., \textit{attribute unlearning}, focuses on forgetting sensitive user attributes, which are not part of training data and cannot be unlearned through input unlearning~\cite{beigi2020privacy, guo2022efficient, li2023making, feng2025plug}.
While input unlearning has been extensively studied, attribute unlearning remains comparatively underexplored. This paper aims to bridge this gap by focusing on attribute unlearning.

\begin{figure}
    \centering
    \subfigure[Real-world requirements.]{\includegraphics[width=0.52\linewidth]{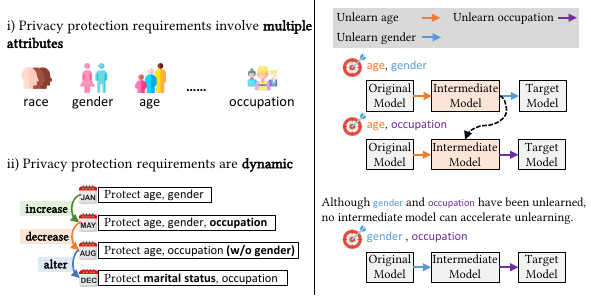} \label{fig:dynamic}}
    \subfigure[Single-attribute unlearning cannot meet dynamic requirements.]{\includegraphics[width=0.45\linewidth]{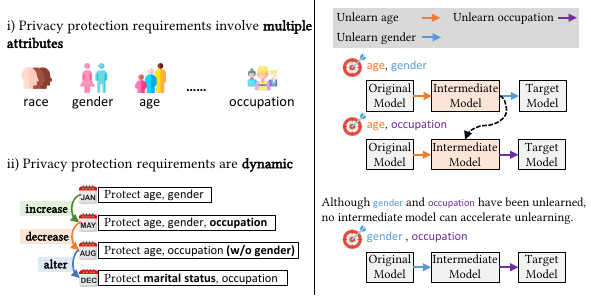} \label{fig:single_gap}}
    \caption{(a) Privacy protection requirements often involve multiple attributes and are dynamic: they may increase, decrease, and alter. (b) Single-attribute unlearning cannot meet dynamic privacy protection requirements. 
    The dashed arrow indicates the storage of the intermediate model can accelerate sequential unlearning.
    }
\end{figure}

Most existing research on attribute unlearning can only handle single and static attributes~\cite{ganhör2022unlearning, li2023making, chen2024posttraining}.
However, in practice, unlearning requests usually involve multiple sensitive attributes and are dynamic: they may increase, decrease, or alter, as illustrated in Figure~\ref{fig:dynamic}.
The frequent changes in privacy protection requirements necessitate attribute unlearning to adapt flexibly and efficiently to these evolving demands.

In this paper, we identify that existing attribute unlearning methods cannot meet these requirements due to two key challenges: \textbf{CH1:} the inability to handle multiple unlearning requests simultaneously, and \textbf{CH2:} the lack of efficient adaptability to dynamic unlearning needs.
Neither i) unlearning each attribute individually using single-attribute unlearning methods (i.e., sequential unlearning) nor ii) the only existing multiple-attribute unlearning method, AdvX~\cite{escobedo2024simultaneous}, can address these two challenges.
For \textbf{CH1}, the sequential unlearning approach may re-introduce previously unlearned attributes into the model while unlearning others, thereby degrading the effectiveness of unlearning.
AdvX, which introduces an adversarial discriminator for each attribute,
faces issues related to potential conflicts in optimization directions,
which results in suboptimal unlearning effectiveness.
For \textbf{CH2}, if the requirements change, the sequential unlearning approach needs to re-apply single-attribute unlearning to each sensitive attribute, even if many of them have already been unlearned.
While saving intermediate models during unlearning alleviates this issue, it consumes considerable memory.
Moreover, in many cases, as shown in Figure~\ref{fig:single_gap}, even with intermediate models, the unlearning process cannot be accelerated.
AdvX is also not adaptable to dynamic privacy protection requirements, as the training process must be re-executed each time the requirement changes.

To address the challenges above, we propose \method{}, a \textbf{L}ightweight and \textbf{E}fficient multiple-attribute unlearnin\textbf{G} Framew\textbf{O}rk.
\method{} divides multiple-attribute unlearning process into two steps: \textit{Embedding Calibration} and \textit{Flexible Combination}.
Firstly, embedding calibration removes information related to a specific attribute from user embedding.
We achieve this by minimizing the mutual information between user embedding and the corresponding attribute.
To preserve recommendation performance, we further introduce a parameter space constraint to ensure that, after calibration, embeddings do not deviate significantly from their original values.
Secondly, flexible combination combines the unlearned embeddings into a single embedding, protecting all sensitive attributes that require protection through a weighted combination.
Only the weights are optimized to ensure an efficient combination.

Our proposed two-step framework effectively addresses both challenges.
Embedding calibration first unlearns a specific attribute, and then flexible combination simultaneously unlearns all attributes by combining these embeddings.
By leveraging the properties of mutual information and the parameter space constraint, we provide a theoretical guarantee for effective simultaneous unlearning of all attributes, addressing \textbf{CH1}.
When a new requirement arises, embedding calibration can be performed in parallel to unlearn attributes not identified in previous requirements, and flexible combination can efficiently construct a new embedding that protects all sensitive attributes, thereby addressing \textbf{CH2}.

We summarize the main contributions of this paper as follows:
\begin{itemize}[leftmargin=*]\setlength{\itemsep}{-\itemsep}
    \item We identify two key challenges of multiple-attribute unlearning in recommender systems (i.e., \textbf{CH1}: handling simultaneous unlearning requirements and \textbf{CH2}: adapting to dynamic needs.).
    To tackle these challenges, we propose a multiple-attribute unlearning framework, named \method{}, which divides the multiple-attribute unlearning process into two steps: \textit{Embedding Calibration} and \textit{Flexible Combination}.
    \item To address \textbf{CH1}, \textit{Embedding Calibration} first unlearns a specific attribute, and then \textit{Flexible Combination} simultaneously unlearns all attributes by combining these embeddings with a theoretical guarantee of effectiveness.
    \item To address \textbf{CH2}, we propose a two-step framework, where \textit{Embedding Calibration} can be performed in parallel to unlearn attributes,
    and \textit{Flexible Combination} can efficiently construct a new embedding that protects all sensitive attributes.
    \item We conduct extensive experiments on three real-world datasets across three representative recommendation models.
    The results demonstrate that our method significantly outperforms existing baselines in terms of multiple-attribute unlearning effectiveness and efficiency.
\end{itemize}

\section{Related Work}\label{sec:related}
In this section, we review two major research lines of recommendation unlearning: traditional recommendation unlearning (input unlearning) and recommendation attribute unlearning.

\subsection{Recommendation Unlearning}
Machine unlearning aims to remove the influence of specific training data on a learned model (i.e., input unlearning)~\cite{nguyen2022survey}.
Existing machine unlearning methods can be categorized into two main approaches:
i) Exact unlearning aims to remove the target data's influence as completely as if the model were retrained from scratch~\cite{cao2015towards, bourtoule2021machine}.
ii) Approximate unlearning aims to estimate the influence of the target data and directly removes the influence through parameter manipulation~\cite{golatkar2020eternal, guo2019certified, sekhari2021remember, warnecke2023machine}.

Following the partition-aggregation framework proposed by SISA (exact unlearning)~\cite{bourtoule2021machine}, subsequent studies achieve exact unlearning tailored for recommender systems~\cite{chen2022recommendation, li2024making, li2023ultrare}.
%
Approximate unlearning has also been explored in the context of recommendation~\cite{li2023selective, zhang2025recommendation}.
A benchmark has been proposed to comprehensively evaluate various recommendation unlearning methods~\cite{chen2024curerec}.

\subsection{Recommendation Attribute Unlearning}
Due to the information extraction capabilities of recommender systems, sensitive attributes such as gender, race, and age of users can be encoded into user embeddings.
However, since these attributes are not explicitly represented in the training data, input unlearning (even exact unlearning or retraining from scratch) cannot effectively address attribute unlearning.

Existing research on recommendation attribute unlearning predominately focuses on single-attribute unlearning.
\citet{ganhör2022unlearning} is the first to address the attribute unlearning problem in recommender systems.
They employ adversarial training during model training on a VAE-based recommendation model, MultVAE~\cite{liang2018variational}, to achieve attribute unlearning.
\citet{li2023making} explore post-training attribute unlearning by directly manipulating model parameters after the training process.
%
%
This work focuses on the attributes with binary labels; in a later work, \citet{chen2024posttraining} extend the method to handle multiple-label attributes.
The only work addressing multiple-attribtue unlearning, AdvX~\cite{escobedo2024simultaneous}, extends the approach of Adv~\cite{ganhör2022unlearning} by introducing an additional attack discriminator for each attribute. 
However, these methods fail to meet real-world dynamic privacy protection requirements due to two key challenges: i) the inability to handle multiple unlearning requests simultaneously and ii) the lack of efficient adaptability to dynamic unlearning needs.
%

\section{Preliminaries}\label{sec:preliminaries}

\subsection{Recommendation Model}
Among recommendation models, CF is a well-established algorithm for generating personalized recommendations by analyzing collaborative information between users and items~\cite{shi2014collaborative}.
Let $\mathcal{U} = \left\{ u_1, \dots u_N\right\}$ and $\mathcal{V} = \left\{v_1, \dots v_M\right\}$ denote the user and item set, respectively.
%
%
%
In general, many existing CF approaches optimize users' latent representations, a.k.a., user embedding, during training to generate personalized recommendations.
We denote user embedding of the model as $[\boldsymbol{\theta}_{1}^{\top}, \dots, \boldsymbol{\theta}_{N}^{\top}] = \boldsymbol{U} \in \mathbb{R}^{N \times d}$, where $\boldsymbol{\theta}_{i} \in \mathbb{R}^{d}$ represents the transpose of the embedding of user $u_{i}$ ($d$ is the dimension of latent space).
We denote the set of attributes as $\mathcal{A} = \{\boldsymbol{A}_{1}, \boldsymbol{A}_{2}, \dots \}$, where $\boldsymbol{A}_{i} = \{c^{i}_{1}, \dots, c^{i}_{p_i} \}$ represents a sensitive attribute, and each $c^{i}_{j}$ denotes a possible value of attribute $\boldsymbol{A}_{i}$ .
We denote the value of attribute $i$ for user $u_{j}$ as $a^{i}_{j}$, $a^{i}_{j} \in \boldsymbol{A}_{i}$.

\subsection{Attacking Setting}
Following the settings in the previous research~\cite{li2023making, chen2024posttraining, wu2020joint, zhang2021graph}, the attack process in the attribute unlearning problem of recommender systems is also referred to as the Attribute Inference Attack (AIA)~\cite{beigi2020privacy, jia2018attriguard}, which is divided into three main stages: exposure, training, and attack.
We adopt the assumption of a gray-box attack during the exposure stage, meaning that not all model parameters are exposed to the attacker; only the embeddings of users and some of their associated attribute information are revealed.
In the training stage, it is assumed that the attacker trains the attack model on the shadow dataset~\cite{salem2018ml}, as assuming the attacker possesses the entire dataset is overly idealistic and impractical.
In the context of multiple attribute unlearning, we assume that during the training stage, the attacker trains a separate attack model for each sensitive attribute.
The attack process is framed as a classification task, where the attack model takes users' embedding as input and the attributes as labels.
In the inference phase, the attacker utilizes their attack model to make predictions.

\subsection{Mutual Information Estimation}
In our framework, we employ Mutual Information (MI) minimization to achieve attribute unlearning because there is a natural link between MI and classification accuracy~\cite{meyen2016relation, cover1999elements, zhang2024cf, liu2025setransformer}.
MI $I(\boldsymbol{x}; \boldsymbol{y})$ is a fundamental measure of the dependence between two random variables, which represents the reduction in the uncertainty of $\boldsymbol{x}$ due to the knowledge of $\boldsymbol{y}$.
If the MI between user embedding and the sensitive attribute is zero, the embedding carries no useful information for predicting the attribute.
In this case, the optimal classifier would be one that randomly guesses the attribute based on its distribution in the sample.

Mathematically, the definition of MI between variables $\boldsymbol{x}$ and $\boldsymbol{y}$ is the relative entropy between the joint distribution and the product distribution $p(\boldsymbol{x})p(\boldsymbol{y})$:
\begin{equation}
    I(\boldsymbol{x}; \boldsymbol{y}) = \mathbb{E}_{p(\boldsymbol{x}, \boldsymbol{y})} \left[\log \frac{p(\boldsymbol{x}, \boldsymbol{y})}{p(\boldsymbol{x}) p(\boldsymbol{y})}\right].
\end{equation}
Calculating the exact value of MI is challenging, as it requires closed-form expression for the density functions and a tractable log-density ratio between the joint and marginal distributions~\cite{belghazi2018mutual, wang2025evaluating}.
To estimate MI, previous work~\cite{cheng2020club} derives CLUB, a contrastive log-ratio upper bound for MI.
With the conditional distribution $p(\boldsymbol{y} \mid \boldsymbol{x})$, MI contrastive log-ratio upper bound is defined as:
\begin{align}
\begin{split}
    {I}_\text{CLUB}(\boldsymbol{x} ; \boldsymbol{y}) & = \mathbb{E}_{p(\boldsymbol{x} , \boldsymbol{y})} \left[\log p(\boldsymbol{y} \mid \boldsymbol{x})\right] \\
    & - \mathbb{E}_{p(\boldsymbol{x})} \mathbb{E}_{p(\boldsymbol{y})} \left[\log p(\boldsymbol{y} \mid \boldsymbol{x})\right].
\end{split}
\end{align}
When the conditional distributions $p(\boldsymbol{y} \mid \boldsymbol{x})$ or $p(\boldsymbol{x} \mid \boldsymbol{y})$ are unavailable, CLUB uses a variational distribution $q_{\phi} (\boldsymbol{y} \mid \boldsymbol{x})$ with parameter $\phi$ to approximate $p(\boldsymbol{y} \mid \boldsymbol{x})$.
A variational CLUB term (vCLUB) is defined as follows:
\begin{align}
\begin{split}
    {I}_\text{vCLUB}(\boldsymbol{x} ; \boldsymbol{y}) & = \mathbb{E}_{p(\boldsymbol{x} , \boldsymbol{y})} \left[\log q_{\phi} (\boldsymbol{y} \mid \boldsymbol{x})\right] \\
    & - \mathbb{E}_{p(\boldsymbol{x})} \mathbb{E}_{p(\boldsymbol{y})} \left[\log q_{\phi} (\boldsymbol{y} \mid \boldsymbol{x})\right].
\end{split}
\end{align}
vCLUB no longer guarantees an upper bound of $I(\boldsymbol{x}; \boldsymbol{y})$ using the variational approximation $q_{\phi} (\boldsymbol{y} \mid \boldsymbol{x})$.
However, with a good variational approximation $q_{\phi} (\boldsymbol{y} \mid \boldsymbol{x})$, vCLUB can still hold an upper bound on MI.
Denote $q_{\phi} (\boldsymbol{x}, \boldsymbol{y}) = q_{\phi} (\boldsymbol{y} \mid \boldsymbol{x}) p(\boldsymbol{x})$, CLUB proves that vCLUB remains a MI upper bound if
\begin{equation}
    KL \left( p(\boldsymbol{x}, \boldsymbol{y}) \Vert q_{\phi} (\boldsymbol{x}, \boldsymbol{y}) \right) \le KL \left( p(\boldsymbol{x}) p(\boldsymbol{y}) \Vert q_{\phi} (\boldsymbol{x}, \boldsymbol{y}) \right).
\end{equation}
This inequality suggests that vCLUB remains a MI upper bound if the variational joint distribution $q_{\phi} (\boldsymbol{x}, \boldsymbol{y})$ is "closer" to $p(\boldsymbol{x}, \boldsymbol{y})$ than to $p(\boldsymbol{x}) p(\boldsymbol{y})$.
Therefore, minimizing $KL ( p(\boldsymbol{x}, \boldsymbol{y}) \Vert q_{\phi} (\boldsymbol{x}, \boldsymbol{y}) )$ helps satisfy the condition for vCLUB to remain an upper bound on MI.
This KL divergence can be minimized by maximizing the log-likelihood of $q_{\phi} (\boldsymbol{y} \mid \boldsymbol{x})$, because of the following equation:
\begin{align}
& \min_{\phi} KL \left( p(\boldsymbol{x}, \boldsymbol{y}) \Vert q_{\phi} (\boldsymbol{x}, \boldsymbol{y}) \right) \notag \\
 = & \min_{\phi} \mathbb{E}_{p(\boldsymbol{x}, \boldsymbol{y})} \left[ \log \left( p(\boldsymbol{y} \mid \boldsymbol{x}) p(\boldsymbol{x}) \right) - \log \left( q_{\phi}(\boldsymbol{y} \mid \boldsymbol{x}) p(\boldsymbol{x}) \right) \right] \notag \\
 = & \min_{\phi} \mathbb{E}_{p(\boldsymbol{x}, \boldsymbol{y})} \left[ \log p(\boldsymbol{y} \mid \boldsymbol{x}) \right] - \mathbb{E}_{p(\boldsymbol{x}, \boldsymbol{y})} \left[ \log q_{\phi}(\boldsymbol{y} \mid \boldsymbol{x}) \right].
 \label{eq:club-opt}
\end{align}
The first term of Eq.~\eqref{eq:club-opt} is independent of the parameter $\phi$.
Therefore, this minimization problem is equivalent to maximizing the second term.
Thus, given samples $\{ ( \boldsymbol{x}_{i}, \boldsymbol{y}_{i} ) \}_{i=1}^{B}$, maximizing the log-likelihood function
\begin{equation}
    \mathcal{L}(\phi) = \frac{1}{B} \sum_{i=1}^{B} \log q_{\phi} (\boldsymbol{y}_{i} \mid \boldsymbol{x}_{i}),
    \label{eq:log-likelihood}
\end{equation}
which leads to a better variational approximation.

\begin{figure*}[t]
    \centering
    \includegraphics[width=\linewidth]{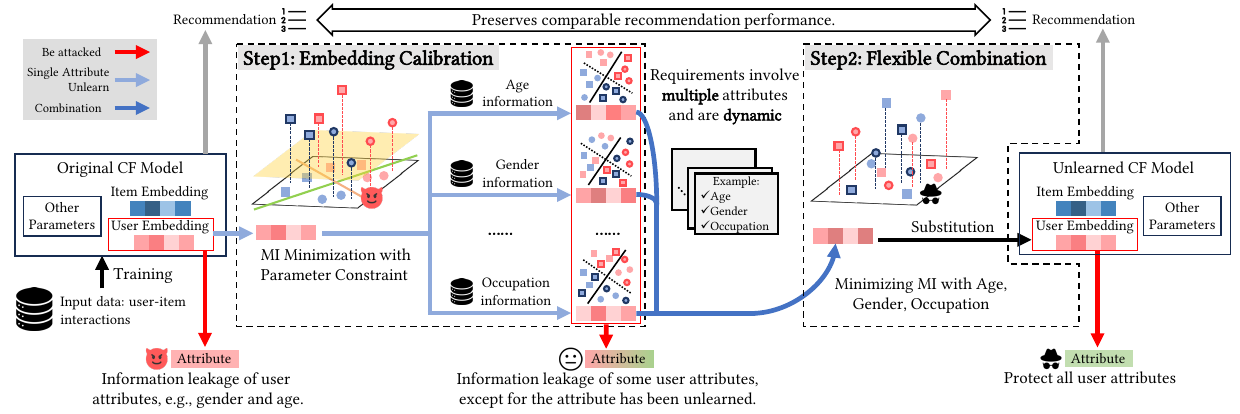}
    \caption{An overview of \method{}. Our proposed \method{} splits multiple-attribute unlearning into two steps: embedding calibration and flexible combination. To illustrate the goal of each step, we provide a sketch of the embedding distribution. In the sketch, the shape, color, and border of the data points represent age, gender, and occupation information, respectively. The lines and plane represent the decision boundaries of the classifier.}
    \label{fig:pipline}
\end{figure*}

In general, MI minimization aims to reduce the correlation between two variables $\boldsymbol{x}$ and $\boldsymbol{y}$ by selecting an optimal parameter $\sigma$ if the joint variational distribution $p_{\sigma} (\boldsymbol{x}, \boldsymbol{y})$.
With vCLUB, MI can be minimized through an alternative optimization approach.
In each training iteration, vCLUB first optimizes $\phi$ by maximizing the log-likelihood $\mathcal{L}(\phi)$ with sampled data points to obtain a better variational approximation.
Then, it estimates the upper bound of MI as follows:
\begin{align}
\hat{I}_{\text{vCLUB}} & = \frac{1}{B^2} \sum_{i=1}^{B} \sum_{j=1}^{B} \left[ \log q_{\phi} (\boldsymbol{y}_i \mid \boldsymbol{x}_i) - \log q_{\phi} (\boldsymbol{y}_j \mid \boldsymbol{x}_i)\right] \notag \\
 & = \frac{1}{B} \sum_{i=1}^{B} \left[ \log q_{\phi} (\boldsymbol{y}_i \mid \boldsymbol{x}_i) - \frac{1}{B} \sum_{j=1}^{B} \log q_{\phi} (\boldsymbol{y}_j \mid \boldsymbol{x}_i) \right].
 \label{eq:hat-vclub}
\end{align}
with samples $\{ ( \boldsymbol{x}_{i}, \boldsymbol{y}_{i} ) \}_{i=1}^{B}$.
After that, the gradient descent is used to optimize $\sigma$.

\section{Methodology}
In this section, we first introduce our proposed multiple-attribute unlearning framework \method{}, which decomposes the task of multiple-attribute unlearning into two steps: \textit{Embedding Calibration} and \textit{Flexible Combination}.
Next, we provide a detailed explanation of these two steps.

\subsection{Overview of \method{}}
To meet the dynamic privacy protection requirements in multiple-attribute unlearning in recommender systems, \method{} performs parallelizable single-attribute unlearning and then combines the unlearned embeddings based on the specific privacy protection requirements.
Figure~\ref{fig:pipline} presents an overview of our proposed \method{}.
After training the recommender system, the user embedding $\boldsymbol{U}_{0}$ of the CF model encode sensitive user information, potentially exposing them to adversaries. 
We denote the sensitive attributes set that needs to be protected under the new privacy protection requirement as $\mathcal{A}_{r} = \left\{\boldsymbol{A}_{1}, \dots, \boldsymbol{A}_{k}\right\}$.

\paragraph{Embedding calibration}
The embedding calibration step modifies the user embedding $\boldsymbol{U}_{0}$ to unlearn a single sensitive attribute $\boldsymbol{A}_{t}$, thereby preventing adversaries from inferring sensitive user information from the embedding while preserving recommendation performance.
After embedding calibration, we obtain $k$ distinct embeddings $\boldsymbol{U}_{1}^{*}, \dots, \boldsymbol{U}_{k}^{*}$, each unlearning the corresponding sensitive attribute $\boldsymbol{A}_{1}, \dots, \boldsymbol{A}_{k}$, respectively.
Although these embeddings protect the unlearned attributes, they may still leak other sensitive user attributes.

\paragraph{Flexible combination}
In this step, embeddings $\boldsymbol{U}_{i}^{*}, i=1, \dots, k$ are combined to form $\boldsymbol{U}^{*} = \alpha_{1} \cdot \boldsymbol{U}_{1}^{*} + \dots + \alpha_{k} \cdot \boldsymbol{U}_{k}^{*}$.
The combination step optimizes only the combination weights $\boldsymbol{\alpha} = [\alpha_{1}, \dots, \alpha_{k}]$, ensuring both flexibility and efficiency.
After the flexible combination, the embedding $\boldsymbol{U}^{*}$ protects all the privacy information that requires protection.
The combined embedding $\boldsymbol{U}^{*}$ then replaces the original user embedding $\boldsymbol{U}_{0}$.

\paragraph{\method{} can meet dynamic requirements}
When a new privacy protection requirement arises:
i) If the new requirement includes new attributes, the embedding calibration step in \method{} can be performed in parallel.
ii) If no new attributes exist, the embedding calibration does not need to be performed again, as embeddings that have already unlearned a specific attribute can be leveraged.
iii) \method{} can swiftly construct a new embedding by combining embeddings that have unlearned a specific attribute, thus meeting the new privacy protection requirement.

\paragraph{\method{} can unlearn multiple attributes simultaneously}
\method{} provides a theoretical guarantee for simultaneously protecting multiple sensitive attributes.
In embedding calibration, we define our unlearning objective as an MI minimization optimization problem with a parameter space constraint.
We minimize MI to prevent adversaries from inferring sensitive user information, while the parameter space constraint preserves recommendation performance.
In flexible combination, we optimize the combination weights by minimizing the MI between the combined embedding and sensitive attributes.
There are several other methods to prevent adversaries from inferring sensitive user information.
Two of the most widely used approaches are distribution alignment (employed in D2DFR) and adversarial training (used in AdvX).
However, these two objectives are not suitable for the two-step approach of \method{}.
The distribution alignment method requires computing the centers of distributions for each attribute.
However, these distributions may differ significantly from one another, thereby combining these embeddings could considerably degrade the recommendation performance of the model.
Since the adversarial training method adversaries different objectives in the first step and is uninterpretable, we cannot guarantee that the combined embedding will effectively protect all sensitive attributes simultaneously.
In contrast, the MI minimization objective ensures that the two-step approach's result does not deviate significantly from the optimal solution.
\begin{definition}
    Let $\boldsymbol{U}_{0}, \boldsymbol{U}^{1}_{i}, \boldsymbol{U}^{2}_{i} \in \mathbb{R}^{N \times d}$ denotes user embeddings,
    \begin{equation*}
        P_1 = \min_{\boldsymbol{\alpha}^{1} \in \Delta^{k-1}} \sum_{t=1}^{k} I \left( \sum_{i=1}^{k} \alpha^{1}_{i} \cdot \boldsymbol{U}^{1}_{i}; \boldsymbol{A}_{t} \right),
    \end{equation*}
    \begin{equation*}
        P_2 = \min_{\boldsymbol{\alpha}^{2} \in \Delta^{k-1}, \boldsymbol{U}_{i}^{2} \in \mathcal{B}_{\epsilon}(\boldsymbol{U}_{0})} \sum_{t=1}^{k} I \left( \sum_{i=1}^{k} \alpha^{2}_{i} \cdot \boldsymbol{U}^{2}_{i}; \boldsymbol{A}_{t} \right),
    \end{equation*}
    where $\Delta^{k-1}$ represents the $(k - 1)$-dimensional standard simplex, $\mathcal{B}_{\epsilon}(\boldsymbol{U}_{0})$ represents the Euclidean ball of radius $\epsilon$ centered at $\boldsymbol{U}_{0}$.
\end{definition}
\begin{theorem}
\label{thm:bound}
    Assume that $\boldsymbol{U}_{i}^{1} = \arg\min_{\boldsymbol{U}_{i}\in \mathcal{B}_{\epsilon}(\boldsymbol{U}_{0})} I\left( \boldsymbol{U}_{i}, \boldsymbol{A}_i\right)$ are constant matrices, 
    and $\Vert \boldsymbol{U}_{0} \Vert_2 \leq C$ for some constant $C > 0$.
    Then, we have the bound $\lvert P_1 - P_2 \rvert \leq 2kL(C + 2\epsilon)$, where $L$ is the Lipschitz constant for MI.
\end{theorem}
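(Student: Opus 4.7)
The plan is to use the feasibility relationship between the two minimization problems together with the $L$-Lipschitz continuity of mutual information (taken as a hypothesis in the statement).

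First I would establish $P_2 \le P_1$ by a feasibility lift. Since each $\boldsymbol{U}_i^1$ is, by hypothesis, the minimizer of $I(\boldsymbol{U}_i; \boldsymbol{A}_i)$ over $\mathcal{B}_{\epsilon}(\boldsymbol{U}_0)$, it automatically lies in that ball. Hence any weight vector $\boldsymbol{\alpha}^1 \in \Delta^{k-1}$ feasible in $P_1$ extends to the pair $(\boldsymbol{\alpha}^1, \{\boldsymbol{U}_i^1\})$ which is feasible in $P_2$ with the same objective value. Minimizing on each side gives $P_2 \le P_1$, so $|P_1 - P_2| = P_1 - P_2$, and only the upper direction remains.

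Next, to bound $P_1 - P_2$, I would fix an optimizer $(\boldsymbol{\alpha}^{2,*}, \{\boldsymbol{U}_i^{2,*}\})$ of $P_2$ and use $\boldsymbol{\alpha}^1 = \boldsymbol{\alpha}^{2,*}$ as a candidate in $P_1$. Writing $\boldsymbol{v}_1 = \sum_i \alpha_i^{2,*} \boldsymbol{U}_i^1$ and $\boldsymbol{v}_2 = \sum_i \alpha_i^{2,*} \boldsymbol{U}_i^{2,*}$, suboptimality of the candidate gives
\begin{equation*}
P_1 - P_2 \;\le\; \sum_{t=1}^{k} \bigl|I(\boldsymbol{v}_1; \boldsymbol{A}_t) - I(\boldsymbol{v}_2; \boldsymbol{A}_t)\bigr| \;\le\; kL\,\|\boldsymbol{v}_1 - \boldsymbol{v}_2\|,
\end{equation*}
where the second inequality applies the $L$-Lipschitz property of each $I(\,\cdot\,; \boldsymbol{A}_t)$ termwise. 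Both $\boldsymbol{v}_1$ and $\boldsymbol{v}_2$ are convex combinations of points in the convex set $\mathcal{B}_{\epsilon}(\boldsymbol{U}_0)$ and therefore lie in $\mathcal{B}_{\epsilon}(\boldsymbol{U}_0)$ themselves; a triangle inequality chain through $\boldsymbol{U}_0$ combined with $\|\boldsymbol{U}_0\|_2 \le C$ then produces a bound of the form $\|\boldsymbol{v}_1 - \boldsymbol{v}_2\| \le 2(C + 2\epsilon)$. Substituting delivers the claimed $|P_1 - P_2| \le 2kL(C + 2\epsilon)$.

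The main obstacle is conceptual rather than computational: the Lipschitz assumption on MI is nontrivial, since mutual information is not Lipschitz in its input argument in general. The constant $L$ must therefore be justified from regularity of the variational conditional $q_{\phi}$ of vCLUB restricted to the bounded region $\mathcal{B}_{\epsilon}(\boldsymbol{U}_0)$, or by smoothness assumptions on the underlying joint density. Once $L$ is granted, the remaining work is the feasibility lift plus the triangle-inequality bookkeeping above, both of which are routine.
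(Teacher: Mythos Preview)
Your proposal is correct but takes a genuinely different route from the paper. The paper never invokes the feasibility inequality $P_2 \le P_1$; instead it bounds $|P_1 - P_2|$ symmetrically by inserting an intermediate point. Writing $I_t^{(q_1,q_2)} = I\bigl(\sum_i \alpha_i^{q_2} \boldsymbol{U}_i^{q_1}; \boldsymbol{A}_t\bigr)$ with $\boldsymbol{\alpha}^1$ and $(\boldsymbol{\alpha}^2,\boldsymbol{U}^2)$ the respective optimizers, the paper splits
\[
|P_1 - P_2| \;\le\; \Bigl|\textstyle\sum_t \bigl(I_t^{(1,1)} - I_t^{(2,1)}\bigr)\Bigr| + \Bigl|\textstyle\sum_t \bigl(I_t^{(2,1)} - I_t^{(2,2)}\bigr)\Bigr|,
\]
bounding the first leg by $2kL\epsilon$ (from $\lVert \boldsymbol{U}_i^1 - \boldsymbol{U}_i^2\rVert \le 2\epsilon$) and the second by $2kL(C+\epsilon)$ (from $\lVert \boldsymbol{U}_i^2\rVert \le C+\epsilon$ and $\lVert\boldsymbol{\alpha}^1-\boldsymbol{\alpha}^2\rVert_1 \le 2$). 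Your argument is cleaner: by exploiting $P_2 \le P_1$ and plugging the $P_2$-optimal weights into $P_1$, you need only a \emph{single} Lipschitz application. In fact your route already delivers the sharper bound $|P_1-P_2|\le 2kL\epsilon$, since both $\boldsymbol{v}_1$ and $\boldsymbol{v}_2$ lie in $\mathcal{B}_\epsilon(\boldsymbol{U}_0)$ and hence $\lVert \boldsymbol{v}_1 - \boldsymbol{v}_2\rVert \le 2\epsilon$; your detour through $\lVert \boldsymbol{U}_0\rVert \le C$ to reach $2(C+2\epsilon)$ is unnecessary and only serves to reproduce the paper's looser constant. Your caveat about the Lipschitz hypothesis on MI is well taken, but the paper simply postulates $L$ in the theorem statement just as you do, so no additional justification is expected.
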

\begin{proof}
    The proof can be found in Appendix B.
\end{proof}
%
%
Theorem~\ref{thm:bound} shows that the gap between $P_1$, the result of \method{}, and $P_2$, the result of an end-to-end version of \method{} that unlearns multiple attributes simultaneously, is bounded by $2kL(C + 2\epsilon)$.
This provides a theoretical guarantee that a linear combination of user embeddings with one specific attribute information removed can lead to a user embedding in which all sensitive attribute information is unlearned, demonstrating that \method{} can protect multiple sensitive attributes simultaneously.

\subsection{Embedding Calibration}
In the embedding calibration step, we focus on two objectives in attribute unlearning: to protect a single sensitive attribute while preserving the recommendation performance.

\paragraph{Protecting a single sensitive attribute}
To prevent the sensitive attribute $\boldsymbol{A}_{t}$ from being successfully classified by the attack model, we minimize the MI between the user embedding $\boldsymbol{U}_{0}$ and $\boldsymbol{A}_{t}$.
This can be formalized as follows:
\begin{equation}
    \boldsymbol{U}_{t}^{*} = \argmin_{\boldsymbol{U}_{t}} {I}(\boldsymbol{U}_{t} ; \boldsymbol{A}_{t}).
    \label{eq:opt-ec-ori}
\end{equation}
With a suitable variational distribution, vCLUB provides an upper bound for MI.
Thus, by minimizing the vCLUB, we can effectively minimize the MI:
\begin{equation}
    \boldsymbol{U}_{t}^{*} = \argmin_{\boldsymbol{U}_{t}} {I}_{\text{vCLUB}}(\boldsymbol{U}_{t} ; \boldsymbol{A}_{t}).
    \label{eq:opt-ec-vclub}
\end{equation}
Specifically, we use a neural network parameterized by $\phi$ to model the variational distribution $q_{\phi} (\boldsymbol{A}_{t} \mid \boldsymbol{u}_{t})$.
With vCLUB, we minimize ${I}(\boldsymbol{U}_{t} ; \boldsymbol{A}_{t})$ by minimizing the following objectives through alternating optimization of $\phi$ and $\boldsymbol{U}_{t}$, as detailed in Section~\ref{sec:preliminaries}:
%

\begin{equation}
    \begin{aligned}
        \phi = \argmax_{\phi} \mathbb{E}_{p(\boldsymbol{U}_{t}, \boldsymbol{A}_{t})} \mathcal{L}(\phi), \\
        \boldsymbol{U}_{t}^{*} = \argmin_{\boldsymbol{U}_{t}} \mathbb{E}_{p(\boldsymbol{U}_{t}, \boldsymbol{A}_{t})} \hat{I}_{\text{vCLUB}}.
    \end{aligned}
    \label{eq:opt-ec-woc}
\end{equation}

\paragraph{Perserve recommendation performance}
To preserve the recommendation performance, we apply a parameter space constraint $\boldsymbol{U}_{t} \in \mathcal{B}_{\epsilon}(\boldsymbol{U}_{0})$ to ensure that, after calibration, the embeddings do not deviate significantly from the original ones,
where $\epsilon$ is a hyperparameter that controls the maximum deviation between the calibrated embedding $\boldsymbol{U}_{t}$ and the original embedding $\boldsymbol{U}_{0}$.
Combining the optimization problem described in Eq.~\eqref{eq:opt-ec-woc} with the parameter space constraint, we obtain a constraint optimization problem.
Since the Euclidean projection operator $\text{proj}(\cdot)$ for the constraint has a closed-form solution, we add a projection operation after the alternative optimization algorithm to solve this constrained optimization problem.
Specifically, after updating the embeddings using gradient descent, we apply a projection operation:
\begin{equation}
    \boldsymbol{U}_{t} = \left\{
    \begin{aligned}
        \boldsymbol{U}_{t}, & \quad \text{if}~ \Vert \boldsymbol{U}_{t} - \boldsymbol{U}_{0} \Vert_2 \le \epsilon, \\
        \text{proj}(\boldsymbol{U}_{t}) = \: & \boldsymbol{U}_{0} + \frac{\epsilon}{\Vert \boldsymbol{U}_{t} - \boldsymbol{U}_{0} \Vert_2} (\boldsymbol{U}_{t} - \boldsymbol{U}_{0}), \; \text{otherwise}.
    \end{aligned}
    \right.
    \label{eq:projection}
\end{equation}
%

\subsection{Flexible Combination}
In the flexible combination step, we combine the embeddings to obtain the combined embedding $\boldsymbol{U}^{*} = \boldsymbol{U}(\boldsymbol{\alpha}) = \sum_{i=1}^{k} \alpha_{i} \boldsymbol{U}_{i}^{*}$, where $\boldsymbol{\alpha} = [\alpha_{1}, \dots, \alpha_{k}] \in \mathbb{R}^{k}$.
To ensure that the combined embedding protects all sensitive attributes, we minimize MI between the combined embedding and all sensitive information:
\begin{equation}
\begin{gathered}
    \min_{\boldsymbol{\alpha}} \sum_{i=1}^{k} I \left(\boldsymbol{U}(\boldsymbol{\alpha}) ; \boldsymbol{A}_{i}\right), \\
    \text{s.t.} \quad \alpha_{i} > 0,\, i = 1, ..., k, \quad \Vert \boldsymbol{\alpha} \Vert_{1} = 1.
\end{gathered}
\end{equation}
The constraint in this optimization problem prevents the trivial solution where $\boldsymbol{\alpha} = \boldsymbol{0}$ and ensures the normalization of the weights.
Similarly, we employ vCLUB and an alternative optimization algorithm to minimize MI.
For each attribute $\boldsymbol{A}_{t}$, a neural network parameterized by $\phi_{t}$ is utilized to model the vatiational distribution $q_{\phi_{t}} (\boldsymbol{A}_{t} \mid \boldsymbol{U}_{t})$.
To meet the constraint, we also use the projected gradient descent, where the projection operator is 
\begin{equation}
    \text{proj}(\boldsymbol{\alpha}) = \text{softmax}(\boldsymbol{\alpha}) = \left[ \frac{\exp (\alpha_1)}{\sum_{j=1}^{k} \exp (\alpha_{j})}, \dots,  \frac{\exp (\alpha_k)}{\sum_{j=1}^{k} \exp (\alpha_{j})}\right].
    \label{eq:projection-alpha}
\end{equation}
By using softmax, we ensure that the projection operation adheres to the constraints while maintaining the stability and efficiency of the optimization process.

We summarize the complete procedure of \method{} in Algorithm~\ref{alg:lego-algorithm}.

\begin{algorithm}[h]
\caption{\method{}}
\label{alg:lego-algorithm}
    \begin{algorithmic}[1]
    \STATE \textbf{Input: } User embedding $\boldsymbol{U}_0$, user sensitive attributes $\mathcal{A}_r$, training epoch $E_{1}$, $E_{2}$, batch size $B$, update step size $\eta$, parameter space constraint threshold $\epsilon$.
    \FOR{$t = 1$ to $k$}
        \STATE \textbf{Initial: } $\boldsymbol{U}_{t}^{0} \gets \boldsymbol{U}_{0}$, randomly initialize the parameters of $\phi$.
        \FOR{$e = 0$ to $E_{1}-1$}
            \STATE Sample $\{ ( \theta_{b_{i}}^{\top}, a_{b_{i}}^{t} ) \}_{i=1}^{B}$ from $p(\boldsymbol{U}_{t}^{0}, \boldsymbol{A}_{t})$.
            \STATE Update $\phi$ by maximizing $\mathcal{L}(\phi)$ as defined in Eq.~\eqref{eq:log-likelihood}.
            \STATE Compute MI estimation $\hat{I}_{\text{vCLUB}}$ as defined in Eq.~\eqref{eq:hat-vclub}.
            \STATE $\boldsymbol{U}_{t}^{e+1} \gets \boldsymbol{U}_{t}^{e} - \eta \cdot \nabla_{\boldsymbol{U_{t}^{e}}} \hat{I}_{\text{vCLUB}}$.
            \STATE Project $\boldsymbol{U}_{t}^{e+1}$ as defined in Eq.~\eqref{eq:projection}.
        \ENDFOR
        \STATE $\boldsymbol{U}_{t}^{*} \gets \boldsymbol{U}_{t}^{E_{1}}$.
    \ENDFOR
    \STATE \textbf{Initial: } $\boldsymbol{\alpha}_{0} \gets [ \frac{1}{k}, \dots, \frac{1}{k}]$, randomly initialize the parameters of $\phi_{1}, \dots, \phi_{k}$.
    \FOR{$e = 0$ to $E_{2} - 1$}
        \STATE Sample $\{ ( \theta_{b_{i}}^{\top}, a_{b_{i}}^{1}, \dots, a_{b_{i}}^{k} ) \}_{i=1}^{B}$ from $p(\boldsymbol{U}(\boldsymbol{\alpha}_{0}), \boldsymbol{A}_{1}, \dots, \boldsymbol{A}_{k})$.
        \STATE Update $\phi_{1}, \dots, \phi_{k}$ by maximizing $\mathcal{L}(\phi)$.
        \STATE Compute MI estimation $\hat{I}_{\text{vCLUB}}$.
        \STATE $\boldsymbol{\alpha}_{e+1} \gets \boldsymbol{\alpha}_{e} - \eta \cdot \nabla_{\boldsymbol{\alpha}_{e}} \hat{I}_{\text{vCLUB}}$.
        \STATE Project $\boldsymbol{\alpha}_{e+1}$ as defined in Eq.~\eqref{eq:projection-alpha}.
    \ENDFOR
    \RETURN new user embedding $\boldsymbol{U}(\boldsymbol{\alpha}_{E_{2}})$.
\end{algorithmic}
\end{algorithm}

\section{Experiments}\label{sec:exp}
To comprehensively evaluate our proposed method, we conduct experiments on three benchmark datasets and three representative recommendation models.
Specifically, we aim to answer the following Research Questions (RQs):
\begin{itemize}[leftmargin=*]\setlength{\itemsep}{-\itemsep}
    \item \textbf{RQ1}: Can our method effectively unlearn multiple attributes simultaneously?
    \item \textbf{RQ2}: Does our method preserve the recommendation performance after unlearning?
    \item \textbf{RQ3}: Can our method meet dynamic privacy protection requirements? In other words, how efficient is our proposed approach?
    \item \textbf{RQ4}: What is the impact of key hyperparameters on both unlearning and recommendation performance in our proposed method?
    \item \textbf{RQ5}: What roles do the embedding calibration step and the flexible combination step play in our proposed \method{}?
\end{itemize}

In the Appendix C, we provide additional experimental results for further analysis.

\subsection{Experimental Settings}


\paragraph{Datasets}
We conduct experiments on three publicly available real-world datasets, each containing user-item interaction data and user attribute information (e.g., age and gender).
\begin{itemize}[leftmargin=*]\setlength{\itemsep}{-\itemsep}
\item {\textbf{MovieLens 100K (ML-100K)}\footnote{https://grouplens.org/datasets/movielens/100k/}}: The MovieLens dataset is widely recognized as one of the most extensively used resources for recommender system research~\cite{harper2016movielens}. It contains user ratings for movies, as well as various user attributes such as gender, age, and occupation. Specifically, ML-100K subset includes 100,000 ratings from 1000 users on 1700 movies.
\item {\textbf{MovieLens 1M (ML-1M)}\footnote{https://grouplens.org/datasets/movielens/1m/}}: A version of MovieLens dataset that has 1 million ratings from 6000 users on 4000 movies.
\item {\textbf{KuaiSAR}\footnote{https://kuaisar.github.io/}}: KusiSAR is a large-scale, real-world dataset collected from Kuaishou, a leading short-video app in China with over 350 million daily active users~\cite{sun2023kuaisar}. For users, this dataset included two encrypted features for each user. In our experiments, we utilize KuaiSAR-small.
\end{itemize}
%
%
We provide details of dataset pre-processing and the statistics of the above datasets after pre-processing in Appendix A.
%


\paragraph{Recommendation Models}
We validate the effectiveness of our proposed method across three representative and widely recognized recommendation models.
\begin{itemize}[leftmargin=*]\setlength{\itemsep}{-\itemsep}
\item {\textbf{NCF}}: Neural Collaborative Filtering (NCF) is a foundational collaborative filtering model that employs neural network architectures~\cite{he2017neural}.
\item {\textbf{LightGCN}}: Light Graph Convolution Network (LightGCN) is a State-Of-The-Art (SOTA) collaborative filtering model that optimizes recommendation performance through a simplified graph convolutional network design~\cite{he2020lightgcn}.
\item {\textbf{MultVAE}}: MultVAE learns to recommend items by decoding the variational encoding of user interaction vectors and has shown superior performance compared to various deep neural network approaches~\cite{liang2018variational}.
\end{itemize}

\begin{table*}[t]
\centering
\caption{Results of recommendation performance(HR@10 and NDCG@10) and unlearning performance (i.e., the performance of attackers: BAcc and F1). Except for Original, the best results are highlighted in \textbf{bold}. We run all models 10 times and report the average results and standard deviation. Results are expressed as percentages (\%).}
\label{tab:main_exp}
\resizebox{\linewidth}{!}{
\begin{tabular}{ccc|cccc|cccc|cccc}
\toprule
\multirow{2}{*}{Dataset} & \multirow{2}{*}{Attributes} & \multirow{2}{*}{Method} & \multicolumn{4}{c}{NCF} & \multicolumn{4}{c}{LightGCN} & \multicolumn{4}{c}{MultVAE} \\ 
\cmidrule{4-7} \cmidrule{8-11} \cmidrule{12-15}
  &  &  & HR@10 $\uparrow$ & NDCG@10 $\uparrow$ & BAcc $\downarrow$ & F1 $\downarrow$ & HR@10 $\uparrow$ & NDCG@10 $\uparrow$ & BAcc $\downarrow$ & F1 $\downarrow$ & HR@10 $\uparrow$ & NDCG@10 $\uparrow$ & BAcc $\downarrow$ & F1 $\downarrow$ \\
\midrule

\multirow{15}{*}{ML-100K} & \multirow{5}{*}{Gender} & Original & 15.67±0.32 & 8.37±0.21 & 65.61±0.74 & 66.62±0.60 & 16.07±0.81 & 8.60±0.17 & 62.96±1.73 & 63.43±1.35 & 16.40±1.01 & 8.63±0.32 & 65.64±1.69 & 65.72±1.11 \\
  &  & DP & 5.20±0.28 & 2.40±0.28 & 53.61±0.70 & 52.50±0.46 & 8.03±1.26 & 4.17±0.45 & 61.62±1.16 & 62.15±1.03 & 14.00±0.85 & 6.80±0.14 & 63.11±2.40 & 63.05±2.68 \\
  &  & D2DFR & 16.07±0.15 & 8.23±0.06 & \textbf{47.42±5.57} & 52.13±1.36 & \textbf{16.23±0.15} & \textbf{8.53±0.06} & 55.71±1.36 & 54.22±1.03 & \textbf{16.27±0.76} & \textbf{8.30±0.26} & 59.63±3.18 & 58.97±0.68 \\
  &  & AdvX & \textbf{16.53±1.02} & \textbf{8.63±0.59} & 55.05±7.81 & 60.19±3.42 & 12.30±0.17 & 6.10±0.17 & 54.80±18.30 & 60.21±5.57 & 10.53±0.40 & 5.37±0.15 & 44.15±2.42 & 48.05±1.20 \\
  &  & \method{} & 15.20±0.20 & 7.87±0.12 & 48.18±7.38 & \textbf{47.27±2.10} & 16.00±0.10 & 8.33±0.06 & \textbf{49.09±2.51} & \textbf{48.84±1.00} & 15.83±0.59 & 7.87±0.15 & \textbf{49.30±3.35} & \textbf{49.98±0.46} \\
\cmidrule{2-15}

  & \multirow{5}{*}{Gender, Age} & Original & 15.67±0.32 & 8.37±0.21 & 62.75±0.46 & 63.26±0.44 & 16.07±0.81 & 8.60±0.17 & 60.30±0.69 & 60.54±0.49 & 16.40±1.01 & 8.63±0.32 & 62.21±1.21 & 62.25 \\
  &  & DP & 5.20±0.28 & 2.40±0.28 & \textbf{44.78±0.57} & \textbf{44.22±0.72} & 8.03±1.26 & 4.17±0.45 & 55.97±0.52 & 56.24±0.47 & 14.00±0.85 & 6.80±0.14 & 58.38±1.06 & 58.36±1.18 \\
  &  & D2DFR & 15.63±0.15 & 8.30±0.10 & 48.57±0.47 & 49.84±0.70 & 16.23±0.12 & \textbf{8.50±0.10} & 52.65±1.20 & 51.69±0.36 & \textbf{16.17±0.06} & \textbf{8.33±0.23} & 54.72±0.17 & 54.59±0.49 \\
  &  & AdvX & \textbf{16.23±0.55} & 8.30±0.17 & 54.19±1.07 & 54.42±1.68 & 12.53±0.46 & 6.33±0.12 & 48.20±2.06 & 49.32±0.29 & 7.67±2.07 & 3.80±1.06 & 50.04±1.58 & 50.41±0.17 \\
  &  & \method{} & 15.70±0.00 & \textbf{8.30±0.00} & 46.74±0.73 & 46.54±1.08 & \textbf{16.50±0.00} & 8.40±0.00 & \textbf{36.11±0.84} & \textbf{35.57±0.63} & 16.00±0.78 & 8.07±0.42 & \textbf{38.97±0.29} & \textbf{39.81±1.04} \\
\cmidrule{2-15}

  & \multirow{5}{*}{\makecell[c]{Gender, Age, \\Occupation}} & Original & 15.67±0.32 & 8.37±0.21 & 42.67±0.11 & 42.97±0.03 & 16.07±0.81 & 8.60±0.17 & 41.78±0.78 & 42.16±0.63 & 16.40±1.01 & 8.63±0.32 & 42.83±0.92 & 42.93±0.58 \\
  &  & DP & 5.20±0.28 & 2.40±0.28 & \textbf{30.76±0.23} & \textbf{30.27±0.40} & 8.03±1.26 & 4.17±0.45 & 38.98±0.37 & 39.51±0.39 & 14.00±0.85 & 6.80±0.14 & 40.73±0.71 & 40.28±0.71 \\
  &  & D2DFR & 15.57±0.12 & 8.23±0.06 & 33.96±0.66 & 33.65±0.40 & 16.37±0.06 & 8.50±0.00 & 35.01±0.55 & 35.30±0.81 & 15.83±0.40 & 8.27±0.38 & 37.69±0.79 & 38.15±0.39 \\
  &  & AdvX & 15.83±0.58 & \textbf{8.40±0.35} & 33.99±3.60 & 35.71±1.34 & 11.97±0.31 & 6.07±0.06 & 34.37±7.63 & 41.67±0.05 & 6.25±2.90 & 3.50±0.99 & \textbf{28.01±0.93} & \textbf{27.83±1.49} \\
  &  & \method{} & \textbf{16.00±0.00} & 8.03±0.06 & 32.30±0.28 & 33.22±0.22 & \textbf{16.60±0.00} & \textbf{8.60±0.00} & \textbf{28.19±0.99} & \textbf{27.59±0.98} & \textbf{16.33±0.50} & \textbf{8.40±0.20} & 30.85±1.67 & 30.88±0.98 \\
\midrule

\multirow{15}{*}{ML-1M} & \multirow{5}{*}{Gender} & Original & 8.20±0.26 & 4.10±0.17 & 76.12±0.20 & 75.60±0.13 & 9.10±0.10 & 4.60±0.10 & 71.28±00.54 & 70.41±0.70 & 9.20±0.26 & 4.40±0.20 & 72.10±1.26 & 71.18±1.08 \\
  &  & DP & 2.20±0.10 & 0.97±0.06 & 55.30±1.01 & 55.32±0.57 & 3.13±0.06 & 1.53±0.06 & 66.26±0.21 & 65.98±0.24 & 7.67±0.26 & 3.70±0.00 & 68.46±0.58 & 68.06±0.58 \\
  &  & D2DFR & 8.50±0.00 & 4.20±0.00 & 51.28±6.52 & 50.64±0.17 & 5.90±0.00 & 3.00±0.00 & \textbf{49.03±5.97} & 55.41±1.64 & 8.03±0.12 & 4.03±0.06 & \textbf{47.86±7.44} & 54.82±2.38 \\
  &  & AdvX & \textbf{8.60±0.26} & \textbf{4.27±0.15} & 55.21±3.10 & 61.00±2.87 & 4.57±0.06 & 2.20±0.00 & 64.17±6.76 & 68.49±2.23 & 4.67±0.15 & 2.37±0.06 & 63.65±1.93 & 64.94±0.45 \\
  &  & \method{} & 8.07±0.06 & 3.97±0.06 & \textbf{47.48±0.63} & \textbf{44.83±0.23} & \textbf{8.70±0.00} & \textbf{4.50±0.00} & 51.20±2.21 & \textbf{49.56±0.18} & \textbf{8.97±0.46} & \textbf{4.40±0.10} & 53.21±0.42 & \textbf{52.17±0.80} \\
\cmidrule{2-15}

  & \multirow{5}{*}{Gender, Age} & Original & 8.20±0.26 & 4.10±0.17 & 71.88±0.13 & 71.62±0.16 & 9.10±0.10 & 4.60±0.10 & 67.45±0.26 & 67.01±0.46 & 9.20±0.26 & 4.40±0.20 & 68.10±0.38 & 67.64±0.23 \\
  &  & DP & 2.20±0.10 & 0.97±0.06 & \textbf{47.76±0.61} & \textbf{47.77±0.48} & 3.13±0.06 & 1.53±0.06 & 60.27±0.49 & 60.13±0.54 & 7.67±0.23 & 3.70±0.00 & 64.50±0.68 & 64.30±0.45 \\
  &  & D2DFR & \textbf{8.40±0.00} & \textbf{4.17±0.06} & 61.09±0.31 & 60.98±0.16 & 5.90±0.00 & 3.00±0.00 & \textbf{44.66±6.25} & 54.31±1.42 & 7.93±0.15 & 3.97±0.06 & 50.22±2.41 & 53.90±1.15 \\
  &  & AdvX & 8.33±0.31 & 4.10±0.10 & 46.65±6.92 & 49.62±1.44 & 4.63±0.06 & 2.30±0.00 & 61.04±0.74 & 61.26±0.76 & 4.43±0.23 & 2.13±0.06 & \textbf{40.98±9.58} & \textbf{41.74±0.17} \\
  &  & \method{} & 8.10±0.00 & 4.00±0.00 & 56.34±0.20 & 55.68±0.29 & \textbf{8.60±0.00} & \textbf{4.40±0.00} & 50.80±0.44 & \textbf{50.13±0.19} & \textbf{8.80±0.00} & \textbf{4.27±0.06} & 53.09±0.34 & 52.29±0.42 \\
\cmidrule{2-15}

  & \multirow{5}{*}{\makecell[c]{Gender, Age, \\Occupation}} & Original & 8.20±0.26 & 4.10±0.17 & 51.25±0.45 & 51.07±0.50 & 9.10±0.10 & 4.60±0.10 & 48.32±0.36 & 48.07±0.55 & 9.20±0.26 & 4.40±0.20 & 49.41±0.53 & 49.17±0.47 \\
  &  & DP & 2.20±0.10 & 0.97±0.06 & \textbf{33.46±0.50} & \textbf{33.43±0.38} & 3.13±0.06 & 1.53±0.06 & 43.26±0.49 & 43.11±0.51 & 7.67±0.23 & 3.70±0.00 & 45.67±0.46 & 45.54±0.34 \\
  &  & D2DFR & \textbf{8.40±0.00} & \textbf{4.10±0.00} & 42.20±0.14 & 41.89±0.17 & 6.00±0.00 & 3.00±0.00 & 43.91±0.16 & 43.48±0.25 & 7.97±0.15 & 4.00±0.00 & 45.39±0.54 & 45.56±0.39 \\
  &  & AdvX & 8.30±0.36 & 4.00±0.20 & 44.32±0.60 & 44.33±0.50 & 4.50±0.00 & 2.20±0.00 & 39.66±0.32 & 39.77±0.38 & 4.57±0.21 & 2.30±0.20 & \textbf{26.28±5.59} & \textbf{29.26±0.06} \\
  &  & \method{} & 8.23±0.06 & 4.00±0.00 & 41.55±0.10 & 41.02±0.02 & \textbf{8.80±0.00} & \textbf{4.50±0.00} & \textbf{38.97±0.28} & \textbf{38.32±0.63} & \textbf{9.07±0.42} & \textbf{4.40±0.17} & 39.55±0.45 & 39.12±0.31 \\
\midrule

\multirow{10}{*}{KuaiSAR} & \multirow{5}{*}{Feat1} & Original & 1.87±0.12 & 0.93±0.06 & 14.87±0.66 & 14.88±0.66 & 3.43±0.06 & 1.80±0.00 & 13.30±1.40 & 13.30±1.41 & 3.30±0.00 & 1.67±0.06 & 13.59±1.33 & 13.59±1.32 \\
  &  & DP & 1.33±0.06 & 0.70±0.00 & \textbf{13.45±1.31} & \textbf{13.45±1.32} & 1.27±0.06 & 0.67±0.06 & 12.37±0.85 & 12.37±0.84 & 2.73±0.06 & 1.40±0.00 & 13.52±0.15 & 13.51±0.14 \\
  &  & D2DFR & \textbf{2.00±0.00} & \textbf{1.00±0.00} & 14.08±1.18 & 14.08±1.20 & \textbf{3.53±0.06} & \textbf{1.80±0.00} & 12.98±0.70 & 12.97±0.70 & \textbf{3.30±0.00} & 1.63±0.06 & 13.26±0.65 & 13.26±0.65 \\
  &  & AdvX & 1.53±0.25 & 0.73±0.15 & 14.69±0.84 & 14.75±0.84 & 2.47±0.06 & 1.27±0.06 & 12.87±0.53 & 12.87±0.55 & 1.33±0.25 & 0.70±0.17 & \textbf{12.54±0.04} & \textbf{12.50±0.00} \\
  &  & \method{} & \textbf{2.00±0.00} & \textbf{1.00±0.00} & 14.41±0.04 & 14.43±0.04 & 3.50±0.00 & \textbf{1.80±0.00} & \textbf{11.57±0.16} & \textbf{11.56±0.17} & 3.30±0.10 & \textbf{1.67±0.06} & 12.59±0.72 & 12.58±0.73 \\
\cmidrule{2-15}

  & \multirow{5}{*}{Feat1, Feat2} & Original & 1.87±0.12 & 0.93±0.06 & 24.01±0.66 & 24.01±0.65 & 3.43±0.06 & 1.80±0.00 & 23.67±0.37 & 23.68±0.36 & 3.30±0.00 & 1.67±0.06 & 24.01±1.09 & 24.01±1.08 \\
  &  & DP & 1.33±0.06 & 0.70±0.00 & \textbf{21.77±1.40} & \textbf{21.78±1.40} & 1.27±0.06 & 0.67±0.06 & 21.49±0.82 & 21.48±0.82 & 2.73±0.06 & 1.40±0.00 & 24.03±0.30 & 24.02±0.29 \\
  &  & D2DFR & \textbf{2.00±0.00} & \textbf{1.00±0.00} & 22.16±0.98 & 22.16±0.98 & \textbf{3.50±0.00} & \textbf{1.80±0.00} & 19.33±0.76 & 19.33±0.77 & \textbf{3.30±0.10} & \textbf{1.67±0.06} & 23.78±0.79 & 23.76±0.79 \\
  &  & AdvX & 1.83±0.15 & 0.87±0.06 & 24.83±0.73 & 24.88±0.71 & 1.93±0.06 & 1.03±0.06 & 23.21±0.57 & 23.25±0.50 & 1.20±0.70 & 0.57±0.32 & 23.78±0.50 & 23.75±0.47 \\
  &  & \method{} & \textbf{2.00±0.00} & \textbf{1.00±0.00} & 23.99±0.44 & 23.67±0.43 & \textbf{3.50±0.00} & \textbf{1.80±0.00} & \textbf{18.47±0.12} & \textbf{18.73±0.11} & 2.23±0.06 & 1.10±0.00 & \textbf{22.97±0.78} & \textbf{22.97±0.78} \\

\bottomrule
\end{tabular}
}
\end{table*}

\paragraph{Unlearning Methods}
We compare our proposed method, \method{}, with the original model and three attribute unlearning methods.
\begin{itemize}[leftmargin=*]\setlength{\itemsep}{-\itemsep}
\item {\textbf{Original}}: This is the original model without attribute unlearning.
\item {\textbf{DP}~\cite{zhu2016differential}}: This method protects user attributes by introducing noise perturbation to the user embedding during the model prediction process.
\item {\textbf{D2DFR}~\cite{chen2024posttraining}}: This method represents the latest SOTA single-attribute unlearning method, which is achieved through distribution alignment. 
To extend this method to multi-attribute unlearning, we adopt a sequential forgetting approach, where after forgetting one attribute, the method continues to forget the next attribute until all attributes have been forgotten.
\item {\textbf{AdvX}~\cite{escobedo2024simultaneous}}: This is the only multiple-attribute unlearning method, which employs adversarial training to achieve attribute unlearning. While the original method is specifically designed for MultVAE, we extend it to other recommendation models.
\end{itemize}

\begin{figure*}[h]
    \centering
    \includegraphics[width=0.8\linewidth]{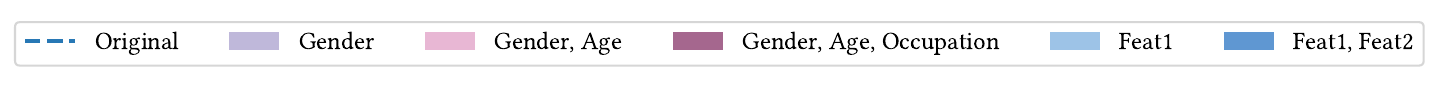} \\
    \vspace{-1.2em}
    \subfigure[\textbf{ML-100K dataset}]{\includegraphics[width=0.33\linewidth]{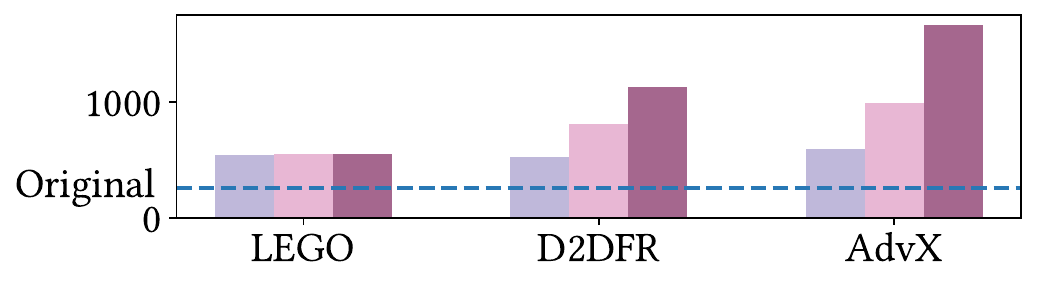}} \label{fig:efficiency_ml_100k}
    \subfigure[\textbf{ML-1M dataset}]{\includegraphics[width=0.33\linewidth]{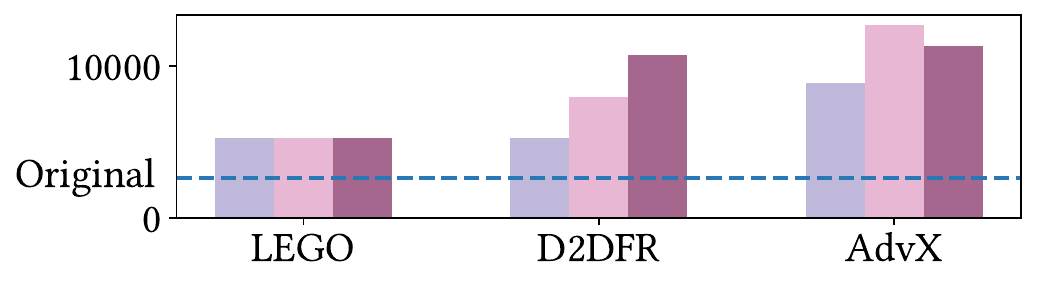}} \label{fig:efficiency_ml_1m}
    \subfigure[\textbf{KuaiSAR dataset}]{\includegraphics[width=0.33\linewidth]{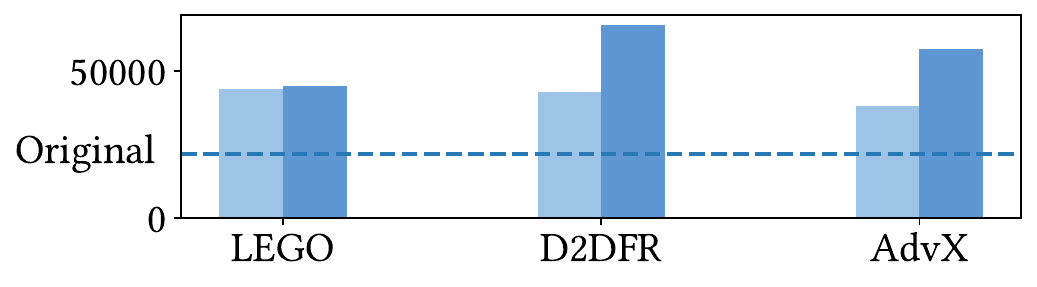}} \label{fig:efficiency_kuaisar}
    \caption{Results of efficiency in adapting to dynamic requirements. We present the running time of compared methods on NCF model across three datasets. We run all models 10 times and report the average results in seconds (s). The dashed line represents the training time of the original recommendation model.}
    \label{fig:efficiency}
\end{figure*}

We provide details of evaluation metrics, parameter settings, and hardware information in Appendix A.




\subsection{Results and Discussions}
\subsubsection{Unlearning Performance (RQ1)}
The primary goal of attribute unlearning is to remove sensitive information from the recommendation model, preventing adversaries from inferring sensitive user attributes.
To comprehensively evaluate the unlearning performance of \method{}, we report two metrics, F1 score and BAcc, in Table~\ref{tab:main_exp}.
%
%
DP, D2DFR, AdvX, and \method{} reduce the BAcc by an average of 12.77\%, 20.84\%, 18.37\%, and 24.31\%, respectively, compared to the original model.
These results demonstrate that \method{} effectively removes sensitive information from the recommendation model.
Specifically, D2DFR reduces the BAcc on one, two, and three attributes by an average of 25.08\%, 22.58\%, and 13.79\%, respectively, indicating that D2DFR is less effective at removing multiple attributes simultaneously.
AdvX reduces the BAcc on MultVAE by an average of 24.75\%, and on NCF and LightGCN by 19.61\% and 13.04\%, respectively, highlighting that AdvX lacks of generalizability across different recommendation models.

\subsubsection{Recommendation Performance (RQ2)}
%
While unlearning sensitive user attributes, the impact on recommendation performance should be minimized to ensure the utility of the recommender system.
We use HR and NDCG to evaluate recommendation performance after unlearning, truncating the rank list at 10 for both metrics.
As shown in Table~\ref{tab:main_exp}, unlearning methods do affect recommendation performance to varying degrees.
%
%
DP, D2DFR, AdvX, and \method{} reduce NDCG@10 by 48.17\%, 5.64\%, 30.30\%, and 3.43\%, respectively, on average.
The results demonstrate that \method{} effectively preserves recommendation performance.
Specifically, D2DFR decreases NDCG@10 on one, two, and three attributes by an average of 5.52\%, 5.48\%, and 6.72\%, respectively.
In contrast, \method{} reduces NDCG@10 on one, two, and three attributes by an average of 3.92\%, 4.08\%, and 1.99\%, respectively.
This indicates that while the sequential unlearning methods degrade model recommendation performance, \method{} does not have the same effect.

\subsubsection{Efficiency in Adapting to Dynamic Requirements (RQ3)}
We evaluate the efficiency of these unlearning methods in adapting to dynamic privacy protection requirements based on their running time.
Since the recommendation model does not affect the overall trend, We conduct experiments on all three datasets using the NCF model, with the total running time reported in seconds.
For better comparison, we indicate the original recommendation model training time with a dashed line.
DP only adds noise to the inference process, so its running time is the same as the original training time.
As shown in Figure~\ref{fig:efficiency}, we observe that compared to AdvX, our proposed \method{} significantly reduces running time across all three datasets.
This is because AdvX employs a time-consuming adversarial training approach during its training process to achieve attribute unlearning.
%
%
%
D2DFR's running time is directly proportional to the number of attributes.
Specifically, our proposed \method{} achieves nearly the same efficiency in multiple attributes unlearning as in single-attribute unlearning.
These results demonstrate that \method{} can effectively meet dynamic privacy protection requirements.


\subsubsection{Parameter Sensitivity (RQ4)}
We investigate the hyperparameter $\epsilon$, which controls the maximum deviation of the unlearned embedding from the original embedding.
This parameter trades off unlearning effectiveness and recommendation performance.
Since the total norm is related to the number of users $N$, we control $\epsilon / N$ to be 0, 0.1, 0.2, 0.3, 0.4, and $\infty$ (without any constraint).
In the experiment, we report the results of unlearning all user attributes recorded in the dataset, while fixing the number of training iterations at 2000 to ensure convergence.
As shown in Figure~4, particularly in Figure~4(b), as $\epsilon / N$ increases, both NDCG@10 and BAcc decrease.
This occurs because a looser constraint allows for more extensive calibration of the embedding to improve attribute unlearning effectiveness, but it degrades recommendation performance.
In Figure~4(b), LightGCN's NDCG@10 increases as the $\epsilon / N$ increases.
This is because our method may unintentionally reduce negative biases, potentially leading to unexpected improvements in recommendation performance. This phenomenon has been consistently observed in prior work~\cite{li2023making, chen2024posttraining}.
As shown in Figure~4(a), in the ML-100K dataset, recommendation performance remains robust to changes in $\epsilon / N$, as it is a relatively small dataset.
Due to space constraints, the full set of hyperparameter sensitivity results are provided in the Appendix C.

\subsubsection{Ablation Study (RQ5)}
Table~\ref{tab:ablation} presents the results of an ablation study conducted using NCF on the ML-1M dataset. 
We sequentially remove the embedding calibration step and the flexible combination step to assess their impact on the unlearning (F1 and BAcc) and recommendation (HR and NDCG) performance. 
Initially, when we replace the embedding calibration step with D2DFR to unlearn a single attribute (D2DFR-FC), we observe a significant increase in F1 score and BAcc and a significant decrease in HR and NDCG. 
This indicates that without the embedding calibration step using MI minimization, the flexible combination step cannot guarantee unlearning effectiveness. 
Subsequently, we remove the flexible combination step and combine the embeddings by averaging them (EC-AC).
Although the model performed well in recommendation performance, there is a noticeable decline in unlearning performance. 
This suggests that combining the embeddings by averaging them may result in suboptimal weights, thereby reducing unlearning performance. 
The results of the ablation study clearly demonstrate the critical roles of both steps in our proposed \method{}. 
%

\begin{table}
\centering
\caption{Results of ablation studies on two steps (Step 1: D2DFR-FC, Step 2: EC-AC).}
\label{tab:ablation}
\resizebox{\linewidth}{!}{
\begin{tabular}{ccccc}
\toprule
 & HR@10 (\%) $\uparrow$ & NDCG@10 (\%) $\uparrow$ & BAcc (\%) $\downarrow$ & F1 (\%) $\downarrow$ \\ 
\midrule
 D2DFR-FC & 7.56±0.04 & 3.24±0.03 & 47.65±1.00 & 47.39±0.80 \\
 EC-AC & 8.19±0.06 & 3.96±0.01 & 45.57±0.06 & 44.35±0.05 \\
 \method{} & 8.23±0.06 & 4.00±0.00 & 41.55±0.10 & 41.02±0.02 \\
 \bottomrule
\end{tabular}
}
\end{table}

\section{Conclusion}\label{sec:conclusion}
In this paper, we investigate multiple-attribute unlearning in recommender systems, aiming to simultaneously remove multiple sensitive attributes while efficiently adapting to dynamic privacy protection requirements. 
To the best of our knowledge, we are the first to identify the dynamic privacy protection requirements that often involve multiple sensitive attributes and evolve over time and across regions.
Existing single-attribute unlearning methods fail to meet these requirements due to two key challenges: i) \textbf{CH1}: the inability to handle multiple unlearning requests simultaneously, and ii) \textbf{CH2}: the lack of adaptability to dynamic unlearning needs.
To address these challenges, we propose \method{}, which decomposes multiple-attribute unlearning into two steps: \textit{Embedding Calibration} and \textit{Flexible Combination}.
We conduct extensive experiments on three real-world datasets and three representative recommendation models to evaluate the effectiveness and efficiency of our proposed method.
The results demonstrate that \method{} achieves performance comparable to baseline methods in single-attribute unlearning and outperforms them in multiple-attribute unlearning while preserving recommendation performance.
Furthermore, our method proves to be highly efficient in adapting to dynamic privacy protection requirements.
Note that all existing work focuses on discrete attributes or uses binning to transform continuous attributes into discrete ones, yet continuous attributes are prevalent in the real world.

\begin{acks}
This work was supported in part by the Zhejiang Provincial Natural Science Foundation of China (No.~LZYQ25F020002), Hangzhou Key Scientific Research Plan (No.~2024SZD1A28), and the National Natural Science Foundation of China (No.~62402148).
\end{acks}

\bibliographystyle{ACM-Reference-Format}
\balance
\bibliography{reference}

\clearpage

\appendix

\section{Experimental Details}\label{sec:exp_detail}
\paragraph{Dataset pre-processing}

\begin{table}[h]
\centering
\caption{Statistics of datasets after pre-processing.}
\label{tab:dataset}
\resizebox{\linewidth}{!}{
\begin{tabular}{lcccccc}
\toprule
Dataset & Attribute & Category \# & User \#  & Item \#   & Rating \# & Sparsity \\ \midrule
\multirow{3}{*}{ML-100K} & Gender & 2 & \multirow{3}{*}{943} & \multirow{3}{*}{1,682} & \multirow{3}{*}{100,000} & \multirow{3}{*}{93.695\%} \\
 & Age & 3 &  &  &  &  \\
 & Occupation & 21 &  &  &  &  \\ \midrule
\multirow{3}{*}{ML-1M} & Gender & 2 & \multirow{3}{*}{6,040} & \multirow{3}{*}{3,706} & \multirow{3}{*}{1,000,209} & \multirow{3}{*}{95.531\%} \\
 & Age & 3 &  &  &  &  \\
 & Occupation & 21 &  &  &  &  \\ \midrule
\multirow{2}{*}{KuaiSAR} 
 & Feat1 & 8 & \multirow{2}{*}{25,473} & \multirow{2}{*}{284,996}   & \multirow{2}{*}{4,619,183}  & \multirow{2}{*}{99.936\%} \\
 & Feat2 & 3 &  &  &  \\
\bottomrule
\end{tabular}
}
\end{table}

For ML-100K and ML-1M, we retrain only users who have interacted with at least five items.
For KuaiSAR, we retain only users who have interacted with at least five items and items that have received at least five user interactions.
To assess recommendation performance, the most recent interaction items for each user (sorted by interaction timestamp) are retained for testing.
For ML-100K and ML-1M, the available gender attribute is restricted to male and female categories.
The age attribute is divided into three groups: under 28 years old, between 28 and 40, and over 40 for ML-100K, and under 25, between 25 and 35, and over 35 for ML-1M.
For KuaiSAR, we use anonymized one-hot encoded categories of users as the target attributes.
We summarize the statistics of the above datasets after pre-processing in Table~\ref{tab:dataset}.

It is worth noting that we do not use the LFM-2B dataset, which has been widely used in previous work, because it is not available for download due to license issues.

\paragraph{Evaluation Metrics}
We specify the evaluation metrics of unlearning effectiveness and recommendation performance as follows.

As mentioned in Section~\ref{sec:preliminaries}, the attack process is considered a classification task, where the attack model takes user embeddings as input and the attributes as labels.
Following \cite{li2023making, chen2024posttraining, tao2023dudb}, we build a Multilayer Perceptron (MLP)~\cite{gardner1998artificial} as the adversarial classifier, since MLP demonstrates the best performance as the attacker, as shown in~\cite{li2023making}.
The dimension of MLP's hidden layer is set to 100, with a softmax layer as the output layer. 
We set the L2 regularization weight to 1.0, the initial learning rate to 1e-2, and the maximum number of iterations to 500, leaving the other hyperparameters at their default values in scikit-learn 1.4.2.
We train the MLP using 80\% of the users and test it on the remaining 20\%.
To evaluate the effectiveness of attribute unlearning, we use two widely adopted classification metrics: the micro-averaged F1 score (F1) and Balanced Accuracy (BAcc). 
Lower values of F1 and BAcc indicate greater effectiveness of attribute unlearning.
We report the results of the attack using five-fold cross-validation.
For the results of the multiple-attribute attack, we report the average F1 and BAcc across all attributes.

To assess recommendation performance, we use leave-one-out testing~\cite{he2016fast, xu2025robust, zhang2023multi}.
We use Hit Ratio at rank K (HR@K) and Normalized Discounted Cumulative Gain at rank K (NDCG@K) as metrics to evaluate recommendation performance.
HR@K measures whether the test item is in the top-K list, while NDCG@K is a position-aware ranking metric that gives higher scores to hits that occur at higher ranks.
In our experiment, the entire negative item set is used to compute HR@K and NDCG@K.
Note that we compare the recommendation performance of several methods under the condition of achieving optimal unlearning effectiveness.

\paragraph{Training parameters}
For model-specific parameters in the recommendation models, we follow the settings provided in the respective original papers.
Specifically, we use the Adam optimizer with a learning rate of 1e-3 and set the embedding dimension to 32 for NCF and LightGCN, and 200 for MultVAE.
The number of epochs is set to 20, 100, and 20 for NCF, LightGCN, MultVAE, respectively.

\paragraph{Details of Applying Single-Attribute Methods for Multi-Attribute Unlearning}
We apply single-attribute unlearning methods sequentially by removing one attribute at a time. For example, to unlearn Gender, Age, and Occupation on MovieLens using D2DFR, we first apply D2DFR to remove Gender, obtaining an intermediate model $\mathcal{M}_1$; then apply D2DFR on $\mathcal{M}_1$ to unlearn Age, resulting in $\mathcal{M}_2$; finally, apply D2DFR again on $\mathcal{M}_2$ to unlearn Occupation, yielding the final unlearned model.

\paragraph{Hyperparameters}
To obtain the optimal performance for all methods, we use grid search to tune the hyperparameters.
In D2DFR, we set the trade-off coefficient 1e-6.
In AdvX, we set the gradient scaling coefficient to be 600.
In our proposed \method{}, we set $\epsilon / N$ to 0.5. 
We use the Adam optimizer with a learning rate of 1e-3 to optimize the embeddings during the embedding calibration step.
We construct a two-layer MLP as the variational distribution, with a hidden layer dimension of 100 and a softmax output layer. The learning rate of the MLP is set to 1e-4, and the training is run for 2000 iterations to ensure convergence.

\paragraph{Hardware information}
All models and algorithms are implemented using Python 3.9 and Pytorch 2.3.0.
The experiments are conducted on a server running Ubuntu 22.04, equipped with 256GB of RAM and an NVIDIA GeForce RTX 4090 GPU.

\begin{figure*}[h]
    \centering
    \includegraphics[width=0.8\linewidth]{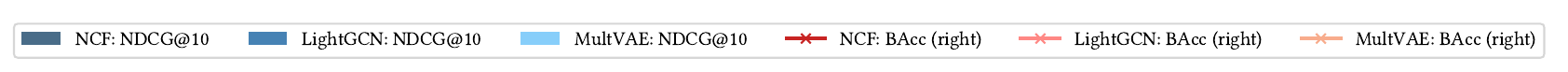} \\
    \vspace{-1.2em}
    \subfigure[\textbf{ML-100K dataset}]{\includegraphics[width=0.32\linewidth]{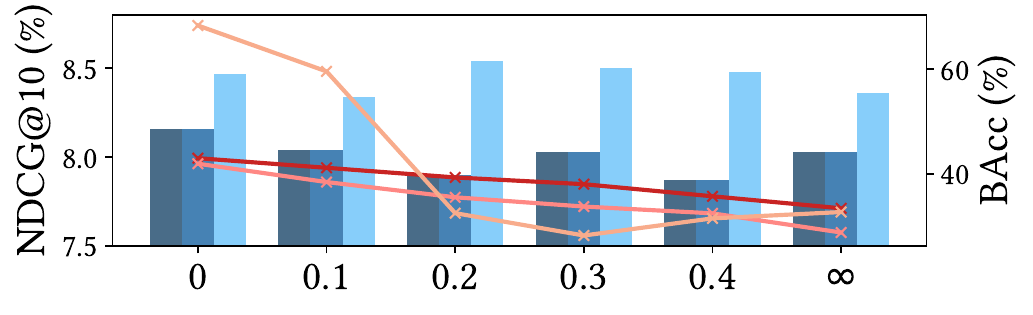} \label{fig:hyper_ml_100k}}
    \subfigure[\textbf{ML-1M dataset}]{\includegraphics[width=0.32\linewidth]{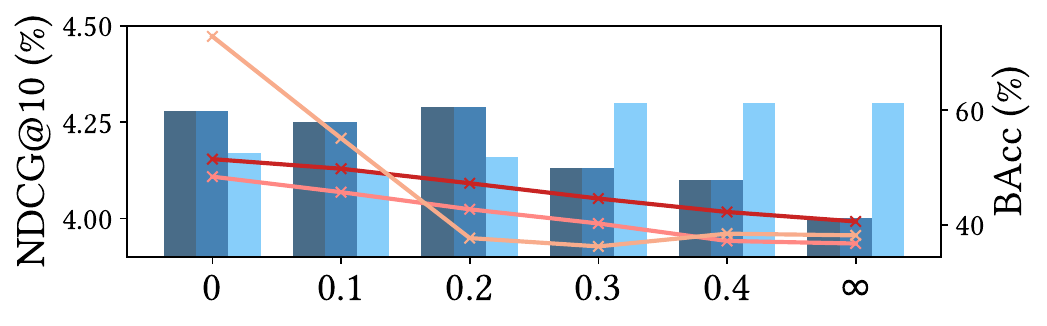} \label{fig:hyper_ml_1m}}
    \subfigure[\textbf{KuaiSAR dataset}]{\includegraphics[width=0.32\linewidth]{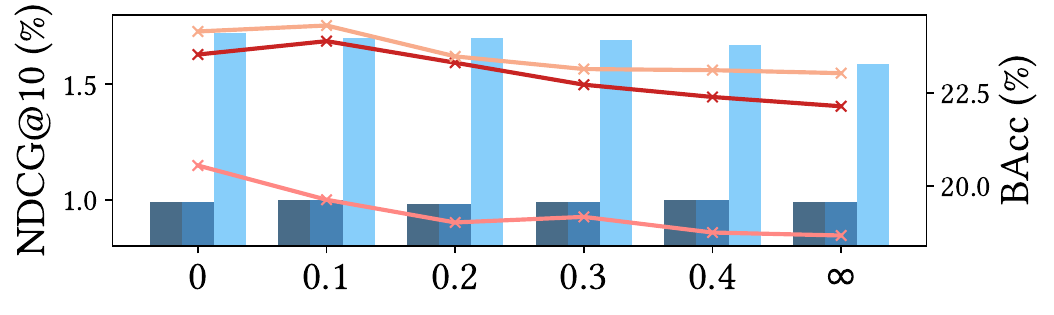} \label{fig:hyper_kuaisar}}
    \caption{Effect of hyperparameter $\epsilon$. We conduct experiments on all three models across three datasets. We use BAcc and NDCG@10 to represent the performance of unlearning and recommendation respectively. We report the results of unlearning all user attributes recorded in the dataset.}
    \label{fig:hyperparam}
\end{figure*}

\section{Proof of Theorem 1}\label{sec:appendix}
\begin{proof}
    For clarity of notation, let us define the combined embedding and the MI between the combined embedding and the sensitive attribute as follows:
    \begin{equation*}
    \begin{gathered}
        \boldsymbol{U}^{q_{1}} \left( \boldsymbol{\alpha}^{q_{2}} \right) = \sum_{i=1}^{k} \left( \alpha_{i}^{q_{2}} \cdot \boldsymbol{U}_{i}^{q_{1}} \right), \\
        I^{(q_1, q_2)}_{t} = I \left( \boldsymbol{U}^{q_1} \left( \boldsymbol{\alpha}^{q_2} \right); \boldsymbol{A}_{t} \right).
    \end{gathered}
    \end{equation*}
    Using the triangle inequality, we can split the $\vert P_1 - P_2 \vert$ into two terms:
    \begin{equation*}
    \begin{aligned}
        \lvert P_1 - P_2 \rvert & = \left\lvert \sum_{t=1}^{k} \left( I^{(1, 1)}_{t} - I^{(2, 2)}_{t} \right) \right\rvert \\
        & \leq \left\lvert \sum_{t=1}^{k} \left( I^{(1, 1)}_{t} - I^{(2, 1)}_{t} \right) \right\rvert + \left\lvert \sum_{t=1}^{k} \left( I^{(2, 1)}_{t} - I^{(2, 2)}_{t} \right) \right\rvert.
    \end{aligned}
    \end{equation*}
    Applying the Lipschitz continuity of MI with respect to its first argument gives us the bounds of these two terms:
    \begin{equation*}
    \begin{aligned}
        \left\lvert \sum_{t=1}^{k} \left( I^{(1, 1)}_{t} - I^{(2, 1)}_{t} \right) \right\rvert 
        %
        %
        & \leq \sum_{t=1}^{k} L \left( \sum_{i=1}^{k} \left\lvert \alpha^{1}_{i} \right\rvert \cdot \left\lVert \boldsymbol{U}^{1}_{i} - \boldsymbol{U}^{2}_{i} \right\rVert_2 \right) \\
        & \leq \sum_{t=1}^{k} 2L\epsilon \left( \sum_{i=1}^{k} \left\lvert \alpha^{1}_{i} \right\rvert \right)
        = 2kL\epsilon, \\
        \left\lvert \sum_{t=1}^{k} \left( I^{(2, 1)}_{t} - I^{(2, 2)}_{t} \right) \right\rvert
        & \leq \sum_{t=1}^{k} L \left\lVert \sum_{i=1}^{k} \left( \alpha^{1}_{i} - \alpha^{2}_{i} \right)\cdot \boldsymbol{U}^{2}_{i} \right\rVert_2 \\
        %
        %
        & \leq 2kL(C + \epsilon),
    \end{aligned}
    \end{equation*}
    where $L > 0$ is the Lipschitz constant.
    Combining these two inequality, the total gap is bounded by:
    \begin{equation*}
        \vert P_1 - P_2 \vert \leq 2kL\epsilon + 2kL(C + \epsilon) = 2kL(C + 2\epsilon).
    \end{equation*}
\end{proof}

\section{Additional Experimental Results}\label{sec:additional_exp}
\subsection{Unlearning Correlated Attributes}
\begin{table}[h]
\centering
\caption{Results of evaluating the impact of unlearning one attribute on the inference performance of a correlated attribute. The experiment is conducted on the LightGCN model using the ML-1M dataset.}
\label{tab:correlated}
\resizebox{\linewidth}{!}{
\begin{tabular}{ccccc}
\toprule
Attribute & Gender F1 & Gender BAcc & Occupation F1 & Occupation BAcc \\ \midrule
Original & 70.41 & 71.28 & 10.29 & 11.38 \\
Gender & 49.56 & 51.20 & 9.05 & 9.36 \\
Occupation & 68.34 & 70.72 & 4.65 & 4.68 \\
Gender, Occupation & 55.63 & 55.09 & 5.68 & 5.84 \\
\bottomrule
\end{tabular}
}
\end{table}

Additionally, we conduct an experiment on the LightGCN model using the ML-1M dataset to evaluate how unlearning one attribute affects the unlearning performance of a correlated attribute (e.g., gender and occupation), with results shown in the Table~\ref{tab:correlated}. We observe that unlearning one attribute slightly reduces F1 and BAcc on a correlated attribute, but the values remain higher than those after \method{}. These results indicate that single-attribute unlearning provides some unintended privacy protection on correlated attributes, but \method{} remains necessary for effective multi-attribute unlearning, as it achieves lower AIA accuracy overall.

\subsection{Empirical Validation of the Theoretical Bound}
\begin{table}[h]
\centering
\caption{Results of evaluating the empirical tightness of the theoretical bound in Theorem~\ref{thm:bound} across datasets and models.}
\label{tab:bound}
\resizebox{0.8\linewidth}{!}{
\begin{tabular}{ccccc}
\toprule
Dataset & MI & NCF & LightGCN & MultVAE \\ \midrule
\multirow{2}{*}{ML-100K} & $P_1$ & 0.4850 & 0.7478 & 0.8180 \\
& $P_2$ & 0.4665 & 0.7239 & 0.7883 \\ \midrule
\multirow{2}{*}{ML-1M} & $P_1$ & 0.5040 & 0.7858 & 0.8655 \\
& $P_2$ & 0.4843 & 0.7535 & 0.8502 \\ \midrule
\multirow{2}{*}{KuaiSAR} & $P_1$ & 0.0240 & 0.0043 & 0.0199 \\
& $P_2$ & 0.0209 & 0.0041 & 0.0114 \\
\bottomrule
\end{tabular}
}
\end{table}

Theorem~\ref{thm:bound} provides a theoretical guarantee that a linear combination of user embeddings with one specific attribute information removed can lead to a user embedding in which all sensitive attribute information is unlearned, demonstrating that LEGO can protect multiple sensitive attributes simultaneously.
While the bound is theoretically derived, its practical tightness and generalization across datasets and models are crucial for real-world applicability.
Since MI cannot be computed directly in our setting, we follow prior work and use the variational upper bound estimated by vCLUB as a proxy. 
We empirically evaluate the values of $P_1$ and $P_2$ across three datasets and three model architectures.

As shown in Table~\ref{tab:bound}, the gap between the theoretical bound and the estimated MI remains small across all settings, indicating that the bound is empirically tight.

\subsection{Sensitivity to $\epsilon$}
Parameter sensitivity results with respect to $\epsilon$ are shown in Figure~\ref{fig:hyperparam}.

\end{document}